\newtheorem{proposition}{Proposition}
\newcommand{\mathbold}[1]{\ensuremath{\boldsymbol{\mathbf{#1}}}}
\newcommand{\g}{\,|\,}
\newcommand{\nestedmathbold}[1]{{\mathbold{#1}}}
\newcommand{\mbc}{\nestedmathbold{c}}
\newcommand{\mbe}{\nestedmathbold{e}}
\newcommand{\mbh}{\nestedmathbold{h}}
\newcommand{\mbI}{\nestedmathbold{I}}
\newcommand{\mbtheta}{\nestedmathbold{\theta}}
\newcommand{\mbSigma}{\nestedmathbold{\Sigma}}
\newcommand{\coloredstate}[1]{\STATE \textcolor{gray}{#1}}
\begin{document}

\title{Marked Temporal Bayesian Flow Point Processes}

\author{Hui Chen, Xuhui Fan, Hengyu Liu and Longbing Cao,~\IEEEmembership{Senior Member,~IEEE}
\thanks{The work is partially sponsored by Australian Research Council Discovery, LIEF and Future Fellowship grants (DP190101079, DP240102050, FT190100734 and FT190100734).}
\thanks{Hui Chen, Xuhui Fan and Longbing Cao are with School of Computing, Macquarie University, Australia. (email: hui.chen2@students.mq.edu.au, xuhui.fan@mq.edu.au, longbing.cao@mq.edu.au).
Hengyu Liu is with Aalborg University, Denmark. (email: heli@cs.aau.dk).}}



\maketitle

\begin{abstract}
Marked event data captures events by recording their continuous-valued occurrence timestamps along with their corresponding discrete-valued types.  They have appeared in various real-world scenarios such as social media, financial transactions, and healthcare records, and have been effectively modeled through Marked Temporal Point Process (MTPP) models. Recently, developing generative models for these MTPP models have seen rapid development due to their powerful generative capability and less restrictive functional forms. However, existing generative MTPP models are usually challenged in jointly modeling events' timestamps and types since: (1) mainstream methods design the generative mechanisms for timestamps only and do not include event types; (2) the complex interdependence between the timestamps and event types are overlooked. In this paper, we propose a novel generative MTPP model called BMTPP. Unlike existing generative MTPP models, BMTPP flexibly models marked temporal joint distributions using a parameter-based approach. Additionally, by adding joint noise to the marked temporal data space, BMTPP effectively captures and explicitly reveals the interdependence between timestamps and event types. Extensive experiments validate the superiority of our approach over other state-of-the-art models and its ability to effectively capture marked-temporal interdependence. 
\end{abstract}

\begin{IEEEkeywords}
Temporal point processes, Bayesian flow networks, generative models.
\end{IEEEkeywords}

\section{Introduction}
\IEEEPARstart{M}{arked} event data is widely seen in the real world as a sequence of events, where each event is recorded with a continuous-valued occurrence timestamp and a categorical event type (a.k.a. mark). Its detailed applications include social media \cite{zhao2015seismic,farajtabar2017coevolve}, where different event types may trigger diverse event patterns; financial transactions \cite{bacry2014hawkes,hawkes2018hawkes}, where a buy or sell action would result in different transaction times; and healthcare records \cite{wang2016isotonic,meng2023dynamic}, where the disease types decide the visit times. That is, timestamps and event types exhibit non-trivial interdependence in these scenarios, as they can influence each other in both directions, specified by the detailed situation.

Marked Temporal Point Process (MTPP) is a stochastic process that can effectively model marked event sequences. Mainstream MTPP methods can be broadly divided into two categories: classical MTPP models, including Poisson processes \cite{kingman1992poisson,wu2020modeling}, Hawkes processes \cite{hawkes1971spectra} and Self-correcting processes \cite{isham1979self}, which use an intensity function to characterize the instantaneous occurrence of events given their history. However, these methods often rely on strong parametric or modeling assumptions, limiting their ability to effectively capture complex patterns in event occurrences. On the other hand, neural MTPP models have emerged as a rapidly developing branch in recent years \cite{shchur2021neural,xiao2019learning,wang2020spatio}. These models employ neural networks, such as RNNs, LSTMs, or transformers, to encode historical events \cite{xue2023easytpp}.  In some cases, they draw on intensity functions inspired by processes like the Hawkes process to model event occurrences \cite{zuo2020transformer,zhang2020self}. Compared to classical MTPP models, neural MTPP models leverage the expressive power of neural networks to better capture the complex dependencies among events. However, applying these two methods to parametric MTPP models requires solving integrals to compute the likelihood function, which usually requires extremely high computational cost \cite{shchur2021neural,chen2020neural}. 

To tackle this issue, two common techniques, namely strong model assumptions and numerical approximation techniques, are typically employed. First, certain model assumptions are introduced, such as treating the timestamp $x$ and event type $m$ as independent or conditionally dependent (e.g., $x$ depends on $m$, or 
$m$ depends on $x$) \cite{du2016recurrent,xiao2017modeling,zuo2020transformer,bilovs2019uncertainty}. This approach simplifies the integral form, thereby reducing computational complexity.  However, in reality, $x$ and $m$ may not be independent or may have more complex dependence, which can lead to model misspecification and consequently restrict the model's expressive capacity \cite{enguehard2020neural}. Second, numerical approximation techniques, such as Monte Carlo sampling or numerical integration, are used to simplify the computation of integrals when closed-form solutions are not available \cite{mei2017neural,nickel2020learning}. Despite this, limitations in sample size and sampling errors mean that these methods only approximate the true solution, which can result in information loss and affect model performance \cite{chen2020neural}.

To fill these gaps, generative MTPP models, which model the target distribution of timestamps using the generative models, have been proposed and have shown promising results \cite{lin2022exploring,yuan2023spatio}. Mainstream generative models, such as diffusion models \cite{sohl2015deep,ho2020denoising}, generative adversarial networks (GANs) \cite{goodfellow2014generative,kang2023cm}, and variational autoencoders (VAEs) \cite{kingma2013auto,pan2023vae}, are energy-based deep probabilistic models, where the optimization objective is an energy function corresponding to an unnormalized negative log-likelihood function. Typically, neural networks are employed to represent this energy function. Since neural networks can approximate complex functions without formal constraints, there is no need to impose model assumptions or use numerical approximation techniques to simplify computation. Consequently, using such generative models in MTPP tasks allows for flexible modeling, enhancing the model's expressiveness and avoiding information loss due to approximation operations. However, generative MTPP models still face two main challenges.

\textit{Challenge I}: In MTPP tasks, we aim to model the joint distribution $p(x,m)$ of two heterogeneous random variables: $x$, which is continuous, and $m$, which is discrete. However, mainstream generative models, such as diffusion models, are designed for continuous random variables due to their reliance on Gaussian noise \cite{song2020score}. As a result, these models are not directly applicable for modeling joint distributions that include discrete random variables $m$ \cite{lin2022exploring}.

\textit{Challenge II}: The target joint distribution $p(x,m)$ involves two random variables, $x$ and $m$, which exhibit a strong interdependence across different scenarios. For example, discussions about clothing types change with the seasons \cite{xue2023easytpp}. Thus, capturing the complex interdependence between timestamps $x$ and different event types $m$ is crucial for improving model performance. However, existing generative MTPP models are unable to directly model the joint distribution, often leading to the assumption that $x$ and $m$ are independent, and applying the generative model only to $x$ \cite{lin2022exploring}. This approach neglects the interdependence between the two random variables, ultimately compromising model performance.

To tackle the above challenges, we propose a novel generative MTPP model called the \underline{M}arked \underline{T}emporal \underline{B}ayesian Flow \underline{P}oint \underline{P}rocess (BMTPP), based on the recently developed generative model, the Bayesian Flow Network (BFN) \cite{graves2023bayesian}, to approximate the joint distribution of timestamps and event types. First, in contrast to existing generative MTPP models that model the continuous random variable $x$ only, BMTPP flexibly leverages BFN in a parameter-based manner to model both the timestamps and event types. Second, by adding the joint noise to the marked temporal data, BMTPP can effectively capture the coupling correlation between timestamps and event types in MTPP tasks, explicitly revealing the complex interactions between them. We summarize the contributions as follows:
\begin{itemize}
    \item Based on the Bayesian flow network, we propose a new generative MTPP model, i.e., BMTPP, which can naturally and flexibly model marked event sequences.
    \item BMTPP can directly model the joint distribution $p(x,m)$, which consists of continuous timestamps $x$ and discrete event types $m$. Moreover, by employing a joint noise strategy within the marked temporal data space, it can effectively capture and explicitly reveal the interdependence between timestamps and event types.
    \item We evaluate BMTPP on four real-world datasets, demonstrating our proposed approach outperforms other state-of-the-art models, as well as the effectiveness of our joint noise strategy in capturing marked-temporal interdependence.
\end{itemize}

\section{Preliminaries}
\subsection{Marked Temporal Point Processes}
A marked temporal point process (MTPP) \cite{rasmussen2018lecture} is a stochastic process whose realization is a set of marked events $\mathcal{S}=\{\mathbf{g}_i\}_{i=1}^{N}$, where $\mathbf{g}_i=\{x_i,m_i\}$. Here, the $x_i \in \mathbb{R}^+$ denotes the timestamp  with $x_i<x_{i+1}$, and the $m_i \in \{1,2, \dots, M\}$ is the corresponding event type for $i$-th event. The historical event sequence that occurred before $x$ is represented as $\mathcal{H}_{x}=\{(x_j,m_j)|j:x_j<x\}$.  We denote the conditional intensity function for MTPP as
\begin{equation}
\begin{aligned}
\lambda^*(x,m) & =\lambda(x, m \mid \mathcal{H}_x) \\
& =\lim _{\Delta x \rightarrow 0^{+}} \frac{\mathbb{E}\left[\mathcal{N}(x+\Delta x, m)-\mathcal{N}(x, m) \mid \mathcal{H}_x\right]}{\Delta x},
\end{aligned}
\end{equation}
which represents the expected instantaneous rate of event occurrences given the history \cite{lin2021empirical}.

 Given the conditional intensity function, the joint distribution $p^*(x,m)$ can then be defined as
\begin{equation}
p^*(x,m)=\lambda^*(x,m) \exp \left(-\int_{x_{i-1}}^x \lambda^*(s,m) d s\right),
\end{equation}
where the calculation of the integral term is challenging, thus requiring the use of model assumptions or numerical approximation techniques to resolve it \cite{shchur2021neural,chen2020neural}.

To encode the historical event sequence $\mathcal{H}_{x}$ into a fixed-dimensional historical embedding $\mathbf{h}_{i-1}$, we first lift the timestamp and event type information of $j$-th historical event in a high-dimensional space as
\begin{equation}
\mathbf{e}_j=[\mathbf{w}(x_j);\mathbf{E}^T\mathbf{m}_j],
\end{equation} 
where $\mathbf{w}$ transforms the one-dimensional $x_j$ to a high-dimensional vector; $\mathbf{E}$ is the embedding matrix for event types, and $\mathbf{m}_j$ is the one-hot encoding for event type $m_j$ \cite{lin2021empirical}. The historical encoder $\mathbf{W}$ can then be used for transforming the sequence $\{\mathbf{e}_1, \mathbf{e}_2, \dots \mathbf{e}_{i-1}\}$ into a fixed $D$-dimensional vector space by
\begin{equation}
\mathbf{h}_{i-1}=\mathbf{W}(\{\mathbf{e}_1, \mathbf{e}_2, \dots \mathbf{e}_{i-1}\}).
\end{equation}
Here, $\mathbf{W}$ could be the recurrent encoders, such as RNN and LSTM \cite{du2016recurrent,mei2017neural}, or the set aggregation encoders, such as attention-based models. \cite{zhang2020self,zuo2020transformer}.

\subsection{Bayesian Flow Networks}
The Bayesian Flow Network (BFN) \cite{graves2023bayesian} is a new generative model that modifies distribution parameters through Bayesian inference based on samples with varying levels of noise. The training process can be described as the $K$-step information transmission process between Alice (sender) and Bob (receiver), which can be summarized as the following procedure:

\textbf{Step 1: Information Transmission.} Alice directly interacts with the data and transmits part of its information to Bob. This part of the information is contained in the sample from the sender distribution obtained by directly adding noise to the data. For a $D$-dimensional data $\mathbf{g}$, the \textit{sender distribution} can be formulated as:
\begin{equation}
p_S(\mathbf{y}|\mathbf{g};\alpha)=\prod_{d=1}^Dp_S(y_d|g_d;\alpha),
\end{equation}
where the $\mathbf{y}=\{y_1,y_2,\dots,y_D\}$ is the noisy version of $\mathbf{g}$, and $\alpha$ is the predefined accuracy associated with noise levels.

\textbf{Step 2: Cognition Enhancement.} Bob then receives this information and enhances his cognitive abilities. For each information transmission step, Bob receives the sample from the sender distribution and updates its input distribution with a Bayesian update to enhance his cognitive abilities. The \textit{Bayesian update distribution} for updated parameters $\boldsymbol{\theta}_k$ is given by
\begin{equation}
p_U(\boldsymbol{\theta}_k|\boldsymbol{\theta}_{k-1},\mathbf{g};\alpha)=\mathbb{E}_{p_S(\mathbf{y}|\mathbf{g};\alpha)}\delta(\boldsymbol{\theta}_k-h(\boldsymbol{\theta},\mathbf{y},\alpha)),   
\end{equation}
where the $\delta(\cdot)$  and $h(\cdot)$ denote the Dirac delta function and Bayesian update function. In the actual execution of BFN, we generalize K steps to an infinite number of steps to directly obtain the \textit{Bayesian flow distribution} of updated parameters $\boldsymbol{\theta}$ for each time step $t=k/K$ as follows:
\begin{equation}
p_F(\boldsymbol{\theta}|\mathbf{g};t)=p_U(\boldsymbol{\theta}|\boldsymbol{\theta}_0,\mathbf{g};\beta(t)), 
\end{equation}
where $\beta(t)=\int_{t'=0}^t \alpha(t')dt'$ is the accuracy schedule.

\textbf{Step 3: Information Inference.} With his improved cognitive abilities, Bob infers the information Alice sent and further refines his inference ability. The information that Bob inferred is reflected in the \textit{receiver distribution}, which is formalized as
\begin{equation}
p_R(\mathbf{y}|\boldsymbol{\theta};t,\alpha)=\mathbb{E}_{p_O(\mathbf{g}|\boldsymbol{\theta};t)}p_S(\mathbf{y}|\mathbf{x};\alpha).    
\end{equation}
Here the $p_O(\mathbf{g}|\boldsymbol{\theta};t)=\prod_{d=1}^D p_O(g_d|\Psi_d(\boldsymbol{\theta},t))$ represent the output distribution and $\Psi(\cdot)$ is a neural network corresponding to Bob’s inference ability. The training objective of BFN is to refine Bob's inference ability by minimizing the discrepancy between the sender distribution and receiver distribution over the entire information transmission process: 
\begin{equation}
\label{obje}
\begin{aligned}
L^{\infty}(\mathbf{g})=&\lim_{\epsilon\to0}\frac{1}{\epsilon}\mathbb{E}_{t\sim U(\epsilon,1),p_F(\boldsymbol{\theta}|\mathbf{g},t-\epsilon)}\\
&D_{KL}(p_S(\mathbf{y}|\mathbf{g};\alpha(t,\epsilon))\|p_R(\mathbf{y}|\boldsymbol{\theta};t-\epsilon,\alpha(t,\epsilon))),
\end{aligned}
\end{equation}
where $\epsilon\stackrel{\text{def}}{=} \frac{1}{K}$ and $\alpha(t,\epsilon)\stackrel{\text{def}}{=}\beta(t)-\beta(t-\epsilon)$.

When the training process is finished, given the prior parameter $\boldsymbol{\theta}_0$ and accuracies $\alpha_1, \dots, \alpha_n$, we can draw the samples from $p_O(\mathbf{g}|\boldsymbol{\theta}, t)$ recursively.

\section{Marked Temporal Bayesian Flow Point Processes}
As in previous studies \cite{lin2022exploring,yuan2023spatio,shchur2021neural}, we consider modeling time intervals $\tau$ directly rather than timestamps $x$, we can then formulate the marked event as $\mathbf{g}=\{\tau, m\}$. In this section, we first give a detailed formulation for continuous random variable $\tau$ and discrete random variable $m$, respectively. The joint noise strategy and algorithms are then provided.   

\subsection{Formulation of Time Intervals $\tau$} 
For the continuous random variables $\tau$, the input distribution is defined as a Gaussian distribution $p_I(\tau|\boldsymbol{\theta}^{(\tau)})=\mathcal{N}(\tau|\mu,\rho^{-1})$, where $\boldsymbol{\theta}^{(\tau)}:=\{\mu,\rho\}$, and the parameters of the prior distribution are set as $\boldsymbol{\theta}_0^{(\tau)}=\{0,1\}$. The sender distribution is represented as 
\begin{equation}
\label{con_sen}
p_S(y^{(\tau)}|\tau;\alpha)=\mathcal{N}(\tau,\alpha^{-1}).
\end{equation}

With a noisy sample $y^{(\tau)}$, we have the Bayesian update function using the favorable property of the Gaussian distribution:
\begin{equation}
h(\{\mu_{k-1},\rho_{k-1}\},y^{(\tau)},\alpha)=\{\mu_k,\rho_k\},
\end{equation}
where the $\rho_k=\rho_{k-1}+\alpha$ and $\mu_k=\frac{\mu_{k-1}\rho_{k-1}+y^{(\tau)}\alpha}{\rho_k}$.

Given that $\mu_k$ represents the only random component of $\boldsymbol{\theta}_k^{(\tau)}$, the Bayesian update distribution is written by
\begin{equation}
p_U(\boldsymbol{\theta}_k^{(\tau)}|\boldsymbol{\theta}_{k-1}^{(\tau)},\tau;\alpha)=\mathcal{N}(\mu_k|\frac{\alpha\tau+\mu_{k-1}\rho_{k-1}}{\rho_k},\frac{\alpha}{\rho^2_{k-1}}).
\end{equation}

To generalize the discrete $K$ steps to the continuous infinite steps, we have the corresponding Bayesian flow distribution as
\begin{equation}
\label{flow}
p_F(\boldsymbol{\theta}^{(\tau)}|\tau;t)=p_U(\boldsymbol{\theta}^{(\tau)}|\boldsymbol{\theta}^{(\tau)}_0,\tau;\beta(t)).
\end{equation}

For the continuous random variable, the neural network $\Psi(\cdot)$ takes the $\boldsymbol{\theta}^{(\tau)}$ as the input, and the output is the estimated $\hat{\tau}=\Psi(\boldsymbol{\theta}^{(\tau)},t)$ \footnote{For the sake of readability, we have simplified the process of deriving estimated output $\hat{\tau}$. For further details, please refer to \cite{graves2023bayesian}.}. Thus the resulting output distribution can be given by
\begin{equation}
\label{con_out}
p_O(\tau|\boldsymbol{\theta}^{(\tau)};t)=\delta(\tau-\Psi(\boldsymbol{\theta}^{(\tau)},t)).
\end{equation}

Combining the \eqref{con_sen} and \eqref{con_out} together, the receiver distribution can be derived as 
\begin{equation}
\begin{aligned}
p_R(y^{(\tau)}|\boldsymbol{\theta}^{(\tau)};t,\alpha)&=\underset{\delta(\tau-\Psi(\boldsymbol{\theta}^{(\tau)},t))}{\mathbb{E}} \mathcal{N} (y^{(\tau)}|\tau,\alpha^{-1})\\
&=\mathcal{N}(y^{(\tau)}|\hat{\tau},\alpha^{-1}).
\end{aligned}
\end{equation}

\subsection{Formulation of Marks $m$}
For the discrete random variables $m$, we have $m\in [M]$, where $[M]$ represents the set of integers from $1$ to $M$. Therefore, the input distribution of $m$ is defined as a categorical distribution with parameter $\boldsymbol{\theta}^{(m)}=(\theta^{(m)}_1,\dots,\theta^{(m)}_M) \in \Delta^{M-1}$, where the $\theta_m^{(m)} \in [0,1]$. The input prior is set to be uniform with $\boldsymbol{\theta}_0^{(m)}=\frac{\mathbf{1}}{\mathbf{M}}$, which is the length $M$ vector whose entries are all $\frac{1}{M}$. The sender distribution can be obtained based on the central limit theorem as
\begin{equation}
\label{dis_sen}
p_S(y^{(m)}|m;\alpha)=\mathcal{N} (\alpha (M\mathbf{e}_{m}-\mathbf{1}),\alpha M \mathbf{I}),
\end{equation}
where the $\mathbf{1}$ denotes the vector of ones, $\mathbf{I}$ is the identity matrix and $\mathbf{e}_m\in \mathbb{R}^{M}$ is a vector obtained by mapping class index $m$ to a one-hot vector of length $M$ \cite{graves2023bayesian}.

Based on the proof of \cite{graves2023bayesian}, we can denote the Bayesian update function as
\begin{equation}
h(\boldsymbol{\theta}^{(m)}_{k-1},y^{(m)},\alpha):=\frac{e^{y^{(m)}}\boldsymbol{\theta}^{(m)}_{k-1}}{\sum_{m=1}^M e^{y^{(m)}_m}(\boldsymbol{\theta}^{(m)}_{k-1})_m}.
\end{equation}

The Bayesian update function can be provided by
\begin{equation}
\begin{aligned}
&p_U(\boldsymbol{\theta}^{(m)}|\boldsymbol{\theta}^{(m)}_{k-1},m;\alpha)\\
&=\underset{\mathcal{N} (\alpha (M\mathbf{e}_{m}-\mathbf{1}),\alpha M \mathbf{I})}{\mathbb{E}} \delta\bigg(\boldsymbol{\theta}^{(m)}-\frac{e^{y^{(m)}}\boldsymbol{\theta}^{(m)}_{k-1}}{\sum_{m=1}^M e^{y^{(m)}_m}(\boldsymbol{\theta}^{(m)}_{k-1})_m}\bigg).
\end{aligned}
\end{equation}

Similar to \eqref{flow}, to generalize the discrete $K$ steps to the continuous infinite steps, the Bayesian flow distribution is given by
\begin{equation}
\begin{aligned}
&p_F(\boldsymbol{\theta}^{(m)}|m;t)\\
&=\underset{\mathcal{N} (\beta(t) (M\mathbf{e}_{m}-\mathbf{1}),\beta(t) M \mathbf{I})}{\mathbb{E}} \delta(\boldsymbol{\theta}^{(m)}-\mathrm{softmax}(y^{(m)})).
\end{aligned}
\end{equation}

For the discrete random variable, the neural network $\Psi(\cdot)$ inputs the $\boldsymbol{\theta}^{(m)}$ and outputs the $\Psi(\boldsymbol{\theta}^{(m)},t) \in \mathbb{R}^M$, the corresponding output distribution can be derived as 
\begin{equation}
\label{dis_out}
p_O(m|\boldsymbol{\theta}^{(m)};t)=(\mathrm{softmax}(\Psi(\boldsymbol{\theta}^{(m)},t)))_m.
\end{equation}

Combining the \eqref{dis_sen}
and \eqref{dis_out} together, then we have receiver distribution as
\begin{equation}
\begin{aligned}
&p_R(y^{(m)}|\boldsymbol{\theta}^{(m)};t,\alpha)\\
&=\sum_{m=1}^M p_O(m|\boldsymbol{\theta}^{(m)};t) \mathcal{N} (\alpha (M\mathbf{e}_{m}-\mathbf{1}),\alpha M \mathbf{I}).
\end{aligned}
\end{equation}

\subsection{Joint Noise Strategy}
Under the assumption of independence between timestamps and event types, the joint sender distribution for modeling marked temporal data can be written as follows:
\begin{equation}
\begin{aligned}
&p_S\left(\left(\begin{array}{c}
         y^{(\tau)}  \\
         y^{(m)}  
    \end{array}\right)\g \mathbf{g}, \alpha\right)\\
    &=\mathcal{N}\left(\left(\begin{array}{c}
         y^{(\tau)}  \\
         y^{(m)}  
    \end{array}\right) \g \left(\begin{array}{c}
         \tau  \\
         \alpha(M\mbe_{m}-\mathbf{1}) 
    \end{array}\right), \left(\begin{array}{cc}
         {\alpha^{-1}} & \mathbf{0}^{\top} \\
         \mathbf{0} & \alpha M\mbI
    \end{array}\right)\right).
\end{aligned}
\end{equation}

However, timestamps and event types usually show complex coupling correlations in MTPP tasks. In this paper, we propose a joint noise strategy to capture their interdependence, improving the model's ability to learn the marked temporal joint distribution. To be specific, we add the joint noise to the marked temporal data space, the modified joint sender distribution can be defined as
\begin{equation}
\label{cov_mat}
\begin{aligned}
&p_S\left(\left(\begin{array}{c}
         y^{(\tau)}  \\
         y^{(m)}  
    \end{array}\right)\g \mathbf{g}, \alpha\right)\\
    &=\mathcal{N}\left(\left(\begin{array}{c}
         y^{(\tau)}  \\
         y^{(m)}  
    \end{array}\right) \g \left(\begin{array}{c}
         \tau  \\
         \alpha(M\mbe_{m}-\mathbf{1}) 
    \end{array}\right), \left(\begin{array}{cc}
         {\alpha^{-1}} & \mbc^{\top} \\
         \mbc & \alpha M\mbI
    \end{array}\right)\right),
\end{aligned}
\end{equation}
where the vector $\mathbf{c}\in \mathbb{R}^{M \times 1}$ denotes the covariance of time interval and different event types, which explicitly reveals the intricate interdependence
between timestamps and event types.

Let $\mbSigma_{{\tau}, m}=\left(\begin{array}{cc}
         {\alpha^{-1}} & \mbc^{\top} \\
         \mbc & \alpha M\mbI
    \end{array}\right)$ represent the covariance matrix in \eqref{cov_mat}, we have the conditions for the value of $\mathbf{c}$ as follows:
\begin{proposition}
To make sure the matrix $\mbSigma_{\tau, m}$ is positive definite, $\mbc$ is restricted as $\mbc^{\top}\mbc<M$.     
\end{proposition}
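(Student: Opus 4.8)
The plan is to reduce positive definiteness of the block matrix $\mbSigma_{\tau,m}$ to a single scalar inequality by means of the Schur complement. The key structural observation is that, since $\alpha>0$ is an accuracy parameter and $M\ge 1$ is the number of event types, the lower-right block $\alpha M\mbI$ is manifestly positive definite; it is therefore most convenient to take the Schur complement against this block rather than against the scalar corner $\alpha^{-1}$, because the resulting complement is a scalar and its positivity is immediate.

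First I would recall the standard Schur complement criterion: for a symmetric matrix partitioned as $\left(\begin{smallmatrix} A & B^{\top} \\ B & D\end{smallmatrix}\right)$ with $D$ invertible, the full matrix is positive definite if and only if $D$ is positive definite and the Schur complement $A-B^{\top}D^{-1}B$ is positive definite. Here $A=\alpha^{-1}$, $B=\mbc$, and $D=\alpha M\mbI$, so $D^{-1}=(\alpha M)^{-1}\mbI$ and the Schur complement is the scalar $\alpha^{-1}-\mbc^{\top}(\alpha M)^{-1}\mbc=\frac{1}{\alpha}\left(1-\frac{\mbc^{\top}\mbc}{M}\right)$.

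Second, I would note that because $\frac{1}{\alpha}>0$, this scalar is positive precisely when $1-\frac{\mbc^{\top}\mbc}{M}>0$, i.e. when $\mbc^{\top}\mbc<M$. Combined with $\alpha M\mbI\succ 0$, the criterion then yields that $\mbSigma_{\tau,m}\succ 0$ exactly when $\mbc^{\top}\mbc<M$, which both establishes the claimed restriction and shows it is sharp. As an independent cross-check I could instead complement against the corner $A=\alpha^{-1}$: the Schur complement becomes the $M\times M$ matrix $\alpha M\mbI-\alpha\,\mbc\mbc^{\top}=\alpha\bigl(M\mbI-\mbc\mbc^{\top}\bigr)$, and since $\mbc\mbc^{\top}$ is rank-one with largest eigenvalue $\mbc^{\top}\mbc$, positive definiteness of $M\mbI-\mbc\mbc^{\top}$ is again equivalent to $\mbc^{\top}\mbc<M$.

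The computation itself is routine; the only point requiring care---and thus the sole obstacle---is verifying that $\alpha>0$ (and $M>0$), which legitimizes the Schur direction and the division by $\alpha$. This holds throughout, since $\alpha=\alpha(t,\epsilon)=\beta(t)-\beta(t-\epsilon)$ is an increment of the monotone accuracy schedule $\beta$ and is hence strictly positive. Everything else reduces to a one-line eigenvalue or determinant evaluation, so I do not anticipate any further difficulty.
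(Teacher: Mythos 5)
Your proof is correct, and it takes a genuinely different route from the paper's: you reduce positive definiteness via the Schur complement (against the block $\alpha M\mbI$, with a cross-check against the scalar corner $\alpha^{-1}$), whereas the paper invokes Sylvester's criterion on leading principal minors. Your approach is in fact the more careful of the two. The paper's argument treats $\mbSigma_{\tau,m}$ as if it were $2\times 2$: it checks only the first minor $\alpha^{-1}$ and ``the determinant,'' and the expression it gives for the latter, $\alpha^{-1}\alpha M-\mbc^{\top}\mbc = M-\mbc^{\top}\mbc$, is only literally the determinant when $M=1$; the true determinant of the $(M+1)\times(M+1)$ matrix is $(\alpha M)^{M-1}(M-\mbc^{\top}\mbc)$, and a complete Sylvester argument would also need the $M-1$ intermediate leading principal minors, which equal $(\alpha M)^{k-1}\bigl(M-\sum_{j\le k}c_j^2\bigr)$ and are positive under the same condition, so the paper's conclusion stands despite the gap. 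Your Schur-complement reduction avoids all of this: positive definiteness of $\alpha M\mbI$ is immediate, the complement $\frac{1}{\alpha}\bigl(1-\mbc^{\top}\mbc/M\bigr)$ is a scalar, and you obtain the equivalence---hence sharpness of the bound---in one step; the rank-one eigenvalue cross-check ($M\mbI-\mbc\mbc^{\top}\succ 0$ iff $\mbc^{\top}\mbc<M$) is a nice independent confirmation. Your remark that $\alpha=\beta(t)-\beta(t-\epsilon)>0$ by monotonicity of the accuracy schedule also supplies the positivity hypothesis that the paper simply ``requires'' without justification.
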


\begin{proof}
To prove matrix $\mbSigma_{\tau, m}$ is positive definite, we will apply Sylvester's criterion \cite{gilbert1991positive}, which states that a Hermitian matrix is positive definite if all its leading principal minors are positive.

The first leading principal minor is given by $\alpha^{-1}$. For $\mbSigma_{\tau, m}$ to be positive definite, we require $\alpha>0$.  The second leading principal minor is the determinant of $\mbSigma_{{\tau}, m}$, which can be given by
\begin{equation}
\det(\mbSigma_{{\tau}, m})=\alpha^{-1} \alpha M-\mbc^{\top}\mbc.
\end{equation}
To ensure that this determinant is positive, we need
\begin{equation}
\alpha^{-1} \alpha M-\mbc^{\top}\mbc>0 \Rightarrow \mbc^{\top}\mbc<M.
\end{equation}

\end{proof}

Similar to conditional diffusion models \cite{rombach2022high,lin2022exploring}, the input to the neural network in BMTPP includes not only the joint distribution parameters $\boldsymbol{\theta}=\{\boldsymbol{\theta}^{(\tau)},\boldsymbol{\theta}^{(m)}\}$ and the time step $t$, but also the historical embedding $\mathbf{h}_{i-1}$ as the condition, which provides information about historical events to predict future events. Consequently, the joint receiver distribution can be expressed as follows:

\begin{equation}
\begin{aligned}
&p_R\left(\left(\begin{array}{c}
         y^{(\tau)}  \\
         y^{(m)}  
    \end{array}\right)\g \boldsymbol{\theta}; \mathbf{h}_{i-1}, \alpha, t\right)\\
&=\sum_{m=1}^Mp_O(m|\Psi({\mbtheta}, \mathbf{h}_{i-1},t))\\
    &\mathcal{N}\left(\left(\begin{array}{c}
         y^{(\tau)}  \\
         y^{(m)}  
    \end{array}\right)|\left(\begin{array}{c}
         \Psi({\mbtheta},\mathbf{h}_{i-1},t)  \\
         \alpha(M\mbe_{m}-1) 
    \end{array}\right), \left(\begin{array}{cc}
         {\alpha^{-1}} & \mbc^{\top} \\
         \mbc & \alpha M\mbI
    \end{array}\right)\right),
\end{aligned}
\end{equation}
where we employ commonly used attention-based models to generate the $D$-dimensional historical embedding $\mathbf{h}_{i-1}$ as in \cite{lin2022exploring}.

The new objective function can be obtained by calculating the KL divergence between the joint sender distribution and joint receiver distribution:


\begin{equation}
\label{loss}
\begin{aligned}
&L^{\infty}(\mathbf{g}) \\
&= \lim_{\epsilon\to0}\frac{1}{\epsilon}\mathbb{E}_{t\sim U(\epsilon,1),p_F(\boldsymbol{\theta}|\mathbf{g},t-\epsilon)}
    \\
    &\bigg[\ln p_S\left(\left(\begin{array}{c}
         y^{{(\tau)}}  \\
         y^{(m)}  
    \end{array}\right)\g \mathbf{g}, \alpha\right) -\ln \sum_{m=1}^Mp_O(m|\Psi({\mbtheta}, \mathbf{h}_{i-1},t))\\
    &\mathcal{N}\left(\left(\begin{array}{c}
         y^{(\tau)}  \\
         y^{(m)}  
    \end{array}\right)|\left(\begin{array}{c}
         \Psi({\mbtheta},\mathbf{h}_{i-1},t)  \\
         \alpha(M\mbe_{m}-1) 
    \end{array}\right), \left(\begin{array}{cc}
         {\alpha^{-1}} & \mbc^{\top} \\
         \mbc & \alpha M\mbI
    \end{array}\right)\right)
\bigg].
\end{aligned}
\end{equation}

\subsection{Algorithm}
\textbf{Training Process.} Given the prior distribution of marked temporal data $\boldsymbol{\theta}_0=\{\boldsymbol{\theta}_0^{(\tau)},\boldsymbol{\theta}_0^{(m)} \}$, accuracy schedule $\beta(t)$, and historical embedding $\mathbf{h}_{i-1}$. We can train the BMTPP with the continuous-time loss \eqref{loss} in a recursive manner.

\textbf{Sampling Process.} With trained BMTPP, we can obtain the samples from the output distribution $p_O(\mathbf{g}|\boldsymbol{\theta};\mathbf{h}_{i-1},t)$ conditioned on the historical embedding $\mathbf{h}_{i-1}$. 

To provide a clearer understanding of the training and sampling process for the proposed approach, we have outlined the detailed algorithms in Algorithm \ref{alg1}, Algorithm \ref{alg2}, and Algorithm \ref{alg3}. Fig. \ref{fig:BMTPP} illustrates the entire framework of BMTPP.

\begin{figure*}[t]
\centering
\includegraphics[width=0.85\linewidth]{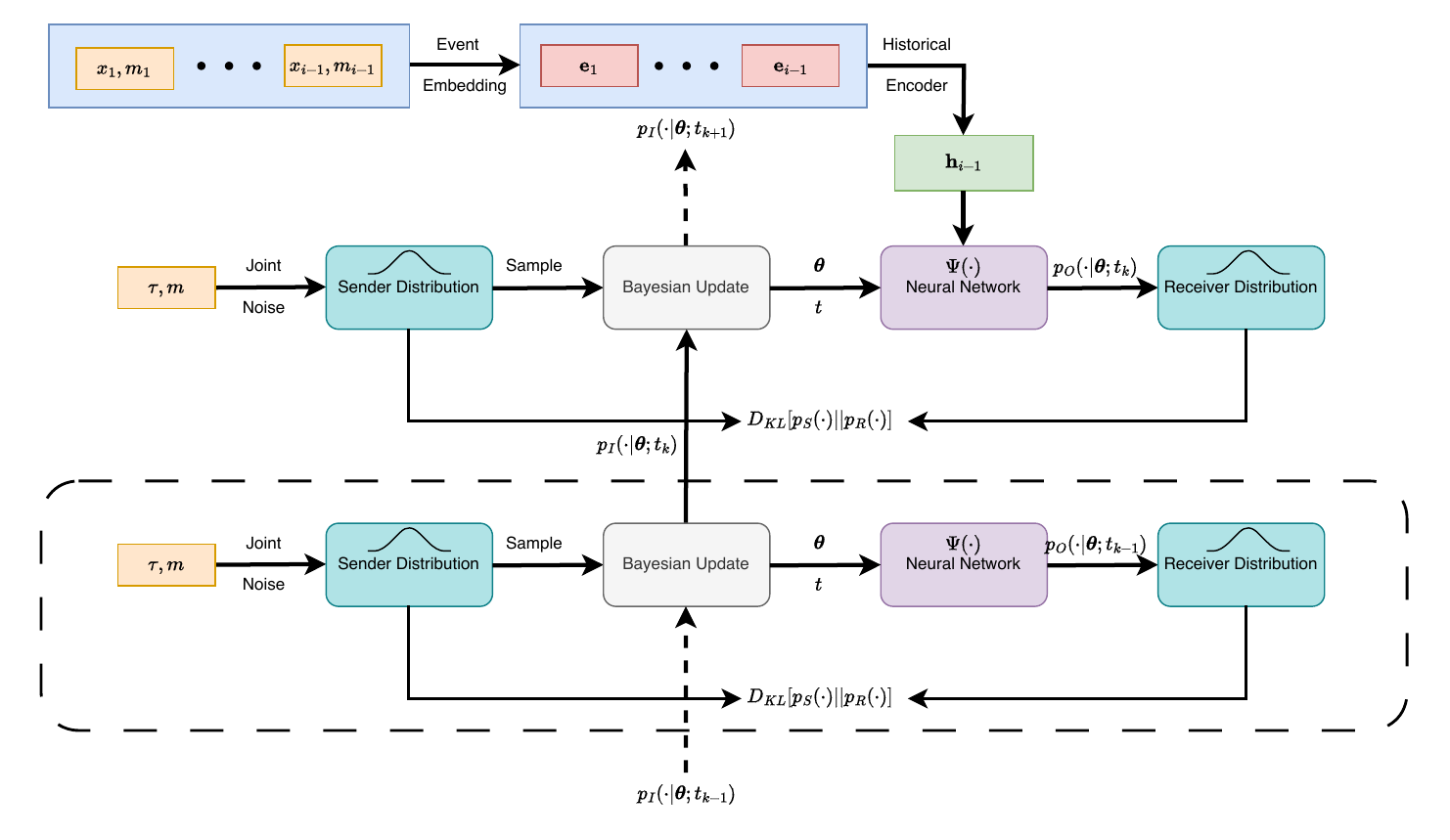}
\caption{The framework of BMTPP. We aim to learn the marked temporal joint distribution conditioned on the historical embedding $\mathbf{h}_{i-1}$ generated by the historical encoder.}
\label{fig:BMTPP}
\end{figure*}

\begin{algorithm}
	\renewcommand{\algorithmicrequire}{\textbf{function:} \parbox[t]{.8\textwidth}{OUTPUT\textunderscore PREDICTION($\mu$, $\boldsymbol{\theta}^{(m)} \in \mathbb{R}^{M\times1}$, $t\in[0,1]$, \\
    $t_{\min}\in \mathbb{R}^{+}$, $x_{\min}$, $x_{\max} \in \mathbb{R}$, $\mathbf{h}_{i-1}\in\mathbb{R}^D$)}}
	\renewcommand{\algorithmicensure}{\textbf{end function}}
	\caption{Function for BMTPP}
	\label{alg1}
	\begin{algorithmic}
        \coloredstate{\# Note that $\boldsymbol{\theta}^{(\tau)}=\{\mu,\rho\}$, but $\rho$ is fully determined by $t$}
        \REQUIRE
        \IF{$t<t_{\min}$}
        \coloredstate{\# In our experiment, we set $t_{\min}=1e-6$ and  $[x_{\min}, x_{\max}]=[-1,1]$}
        \STATE $\hat{\tau} \leftarrow 0$
        \ELSE
        \STATE Input $(\mu^{(\tau)}, \boldsymbol{\theta}^{(m)}, t, \mathbf{h}_{i-1})$ to the neural network, receive $\hat{\tau}$, $\Psi(\boldsymbol{\theta},t, \mathbf{h}_{i-1})$ as output
        \STATE clip $\hat{\tau}$ to $x_{\min}, x_{\max}$
        \STATE $p_O(m|\boldsymbol{\theta};t) \leftarrow \mathrm{softmax}(\Psi(\boldsymbol{\theta},t,\mathbf{h}_{i-1}))$
        \ENDIF
        \RETURN $\hat{\tau}$, $p_O(m|\boldsymbol{\theta};t)$
        \ENSURE
	\end{algorithmic}  
\end{algorithm}

\begin{algorithm}
	\renewcommand{\algorithmicensure}{\textbf{end function}}
	\caption{Training Process with Continuous-Time Loss}
	\label{alg2}
	\begin{algorithmic}
        \STATE \textbf{Require:} $\sigma_1 \in \mathbb{R}^+, \beta(1)\in \mathbb{R}^+, M \in \mathbb{N}, \mathbf{h}_{i-1} \in \mathbb{R}^D$
        \STATE \textbf{Input:} timestamp $\tau$ and mark $m$
        \STATE $t \sim U(0,1)$
        \STATE $\gamma \leftarrow 1-\sigma_1^{2t}$
        \STATE $\mu \sim \mathcal{N}(\gamma \tau, \tau(1-\gamma))$
        \STATE $\beta \leftarrow \beta(1)t^2$
        \STATE $\hat{\tau}, p_O(m|\boldsymbol{\theta};t) \leftarrow$ OUTPUT\textunderscore PREDICTION($\boldsymbol{\theta}, \mathbf{h}_{i-1},t$)
        \STATE $L^{\infty}(\mathbf{g}) \leftarrow f_{\text{OBJ}}(\hat{\tau}, p_O(m|\boldsymbol{\theta};\mathbf{h}_{i-1},t)) $ 
        \RETURN $L^{\infty}(\mathbf{g})$
	\end{algorithmic}  
\end{algorithm}

\begin{algorithm}
	\renewcommand{\algorithmicensure}{\textbf{end function}}
	\caption{Sample Generation}
	\label{alg3}
	\begin{algorithmic}
        \STATE \textbf{Require:} $\sigma_1 \in \mathbb{R}^+, \beta(1)\in \mathbb{R}^+, M \in \mathbb{N}$, number of steps $K\in \mathbb{N}$, $\mathbf{h}_{i-1}\in \mathbb{R}^D$
        \STATE $\mu \leftarrow 0$
        \STATE $\rho \leftarrow 1$
        \STATE $\boldsymbol{\theta}^{(m)} \leftarrow (\frac{\mathbf{1}}{\mathbf{M}})$
        \FOR{$k=1$ to $K$}
        \STATE $t \leftarrow \frac{k-1}{K}$
        \STATE $\hat{\tau}, p_O(m|\boldsymbol{\theta};t) \leftarrow$ OUTPUT\textunderscore PREDICTION($\boldsymbol{\theta}, \mathbf{h}_{i-1},t$)
        \STATE $\alpha \leftarrow \sigma_1^{-2i/K}(1-\sigma_1^{2/K})$
        \STATE $y^{(\tau)} \sim \mathcal{N}(\hat{\tau},\alpha^{-1})$
        \STATE $y^{(m)} \sim \mathcal{N}(\alpha M(\mathbf{e}_m-\mathbf{1},\alpha M \mathbf{I}))$
        \ENDFOR
        \STATE $\hat{\tau}, p_O(m|\boldsymbol{\theta};1) \leftarrow$ OUTPUT\textunderscore PREDICTION($\boldsymbol{\theta}, \mathbf{h}_{i-1},1$)
        \RETURN $\hat{\tau},  p_O(m|\boldsymbol{\theta};1)$
	\end{algorithmic}  
\end{algorithm}

\section{Experiments}
\subsection{Experimental Setup}

\textbf{Datasets.} We evaluate our model on four popular real-world datasets containing event data from a wide range of fields. Table \ref{tab:statistics} summarizes the statistics of each real-world dataset.
\begin{table}[t]
    \centering
    \caption{The statistics of four real-world datasets.}
    \label{tab:statistics}
    \begin{tabular}{cccccc}
    \toprule
  \multirow{2}{*}{Dataset}  & \multirow{2}{*}{$\#$ of sequences} & \multirow{2}{*}{Types-$M$} & \multicolumn{3}{c}{Sequence Length}  \\
   \cmidrule{4-6} & & &  Min & Mean & Max  \\
   \midrule
MOOC & $7047$ & $97$ & $4$ & $56$ & $493$ \\
Retweet & $24000$ & $3$ & $50$ & $109$ & $264$\\
Reddit & $10000$ & $984$ & $46$ & $68$ & $100$\\
Stack Overflow & $6633$ & $22$ & $41$ & $72$ & $736$\\
    \bottomrule
      \end{tabular}
\end{table} 
\begin{enumerate}
    \item \textit{MOOC} \cite{song2024decoupled}. This dataset contains event sequences of user interaction behaviors with the online course system. Each user has a sequence of interaction events and there are $M=97$ event types in total.
    \item \textit{Retweet} \cite{zuo2020transformer}. This dataset includes time-stamped user retweet event sequences. There are $M=3$ event types: retweets by "small", "medium" and "large" users. Users with 120 or fewer followers are small, those with between 121 and 1362 followers are medium, and any user with 1363 or more followers are large.
    \item \textit{Reddit} \cite{bae2023meta}. This dataset contains time-stamped user reddit event sequences. There are $M=984$ event types in total, which represent the sub-reddit categories of each sequence.
    \item \textit{Stack Overflow} \cite{zhang2020self}. This dataset includes the user awards on the Stack Overflow question-answering platform spanning a two-year period. Each user has a sequence of badge events and there are $M=22$ event types in total.
\end{enumerate}

\textbf{Baselines.} We compare BMTPP with eight state-of-the-art methods.
\begin{enumerate}
    \item \textit{RMTPP} \cite{du2016recurrent} is a neural MTPP model, which uses an RNN as the historical encoder and the mixture of Gompertz distribution as the intensity function.
    \item \textit{SAHP} \cite{zhang2020self} is a neural MTPP model, which uses the Transformer as the historical encoder, and model the intensity function with an exponential-decayed formulation.
    \item \textit{THP} \cite{zuo2020transformer} is a neural MTPP model, which employs the Transformer as the historical encoder, and models the intensity function with a linear function formulation.
    \item \textit{Log-Norm} \cite{shchur2019intensity} is a neural MTPP model, which uses the LSTM as the historical encoder and the Log-Normal distribution as the intensity function.
    \item \textit{TCVAE} \cite{lin2022exploring} is a neural MTPP model, which employs a revised Transformer as the historical encoder and the VAE as the probabilistic decoder.
    \item \textit{TCGAN} \cite{lin2022exploring} is a generative MTPP model, which uses a revised Transformer as the historical encoder and the GAN as the probabilistic decoder.
    \item \textit{TCDDM} \cite{lin2022exploring} is a generative MTPP model, which utilizes a revised Transformer as the historical encoder and the diffusion model as the probabilistic decoder.
    \item \textit{Dec-ODE} \cite{song2024decoupled} is a recently proposed MTPP model, which uses the neural ordinary differential equations (ODE) to learn the changing influence from individual events.
\end{enumerate}

\textbf{Training Protocol.} We have two important hyperparameters for each model. One is the \textit{embedding size}, i.e., the dimension of historical embedding, which we explore over $\{8,16,32\}$. The other is the \textit{layer number} in the encoder, which we choose from $\{1,2,3\}$. We determine these two hyperparameters for each model that produce the best results. The number of the training epochs and the learning rate are set to $200$ and $1e-4$, respectively, for all approaches. The sampling steps is fixed to $K=100$ for all experiments. In addition, we report the mean and the standard deviation (SD) for each metric based on three experimental results with three random seeds. We utilize the \textit{log-normalization} trick during the training process. To be specific, we first normalize the time intervals with $\frac{\log \tau-\mathrm{Mean} (\log \tau)}{\mathrm{Std}(\log \tau)}$, and then rescaled back through $\mathrm{exp}(\mathrm{Mean} (\log \tau)+\log \tau \cdot\mathrm{Std}(\log \tau))$ for ensuring the sampled time intervals are positive \cite{lin2022exploring}. All experiments are conducted on an NVIDIA Tesla V100 GPU with 60GB of memory.

\textbf{Metrics.} We evaluate our model with three different metrics.
\begin{enumerate}
    \item \textit{Mean absolute percent error} (MAPE) measures the predictive performance of the next time point \cite{lin2021empirical}.
    \item \textit{Continuous ranked probability score} (CRPS) is employed for measuring the goodness-of-fit, as there is no closed form of the conventional metric negative log-likelihood (NLL) for deep generative models \cite{taieb2022learning}.
    
    Follow the definition in \cite{taieb2022learning}, the CRPS can be represented as
    \begin{equation}
    \operatorname{CRPS}\left(F_{\boldsymbol{\theta}}, \tau\right)=\mathbb{E}_{F_{\boldsymbol{\theta}}}\left|Z_1-\tau\right|-\frac{1}{2} \mathbb{E}_{F_{\boldsymbol{\theta}}, F_{\boldsymbol{\theta}}}\left|Z_1-Z_2\right|,
    \end{equation}
    where $F_{\boldsymbol{\theta}}$ denotes the predictive cumulative distribution function with finite first moment; $Z_1$ and $Z_2$ are two independent random variables following the distribution $F_{\boldsymbol{\theta}}$ \cite{jordan2017evaluating,taieb2022learning}.
    
    Since there is no closed form of CRPS for all compared models, we utilize the sampling-based method to calculate CRPS. We can first approximate the $F_{\boldsymbol{\theta}}$ with the empirical CDF as
    \begin{equation}
    F_{\boldsymbol{\theta}}^{(l)}(\tau)=\frac{1}{L} \sum_{l=1}^L \mathbf{1} \left\{\tau_i \leq \tau\right\},
    \end{equation}
    where $l$ stands for the number of samples drawn from the distribution $F_{\boldsymbol{\theta}}$ and $\mathbf{1} \left\{\tau_i \leq \tau\right\}$ is the indicator function. We can then calculate the CRPS as follows:
    \begin{equation}
    \label{eq:crps}
    \operatorname{CRPS}\left(F_{\boldsymbol{\theta}}^{(l)}, \tau\right)=\frac{1}{L} \sum_{l=1}^L\left|\tau_l-\tau\right|-\frac{1}{2 L^2} \sum_{l=1}^L \sum_{g=1}^L\left|\tau_l-\tau_g\right| .
    \end{equation}
    Here, $F_{\boldsymbol{\theta}}$ represents the $p_{\theta}(\tau|\mbh_{i-1})$ in this paper, and $\tau$ is the ground truth observation. According to the \eqref{eq:crps}, We can see that CRPS takes into account both the sampling quality and sampling diversity \cite{lin2022exploring}. 
    \item \textit{Accuracy} (ACC) validates the ability of models to predict the next event type \cite{zhang2020self}.
\end{enumerate}

\subsection{Performance Comparison}
\begin{table*}[t]
\renewcommand{\arraystretch}{1.0}
\caption{Comparison results (± SD) of different methods on four real-world datasets (Results with \textbf{Bold} and \underline{underline} represent the best and the second best results, respectively).}
\label{tab:comparison}
\centering

\resizebox{\linewidth}{!}{

\begin{tabular}{c|c|cccccccc|c}
\toprule
 Dataset & Metric & RMTPP & SAHP & THP & LogNorm & TCVAE & TCGAN & TCDDM & Dec-ODE & BMTPP\\
 \midrule
 \multirow{3}{*}{MOOC} & MAPE $\downarrow$ & 87.674\scriptsize±0.331 & 52.623\scriptsize±0.783 & 46.621\scriptsize±0.874 & 94.234\scriptsize±0.365 & $\boldsymbol{22.564}$\scriptsize±2.431 & 26.269\scriptsize±2.314 & 24.256\scriptsize±0.247 & 43.376\scriptsize±0.423 & $\underline{22.753}$\scriptsize±0.263\\
 & CRPS $\downarrow$ & 32.670\scriptsize±0.426 & $-$ & $-$ & 31.365\scriptsize±0.114 & $\boldsymbol{0.147}$\scriptsize±0.004 & 1.658\scriptsize±0.045 & $\underline{1.137}$\scriptsize±0.001 & $-$ & 1.571\scriptsize±0.026\\
 & ACC $\uparrow$ & 0.281\scriptsize±0.046 & 0.301\scriptsize±0.004 & 0.313\scriptsize±0.015 & 0.324\scriptsize±0.007 & 0.314\scriptsize±0.012 & $\underline{0.333}$\scriptsize±0.024 & 0.321\scriptsize±0.013 & 0.317\scriptsize±0.025 & $\boldsymbol{0.483}$\scriptsize±0.057\\
 \midrule
  \multirow{3}{*}{Retweet} & MAPE $\downarrow$ & 70.270\scriptsize±2.931 & 26.604\scriptsize±0.087 & $\boldsymbol{16.631}$\scriptsize±0.092 & 95.306\scriptsize±0.043 & 21.099\scriptsize±0.038 & 26.357\scriptsize±2.464 & 16.657\scriptsize±1.373 & 28.973\scriptsize±0.056 & $\underline{18.379}$\scriptsize±0.045\\
 & CRPS $\downarrow$ & 0.445\scriptsize±0.091 & $-$ & $-$ & 0.443\scriptsize±0.007 & $\boldsymbol{0.154}$\scriptsize±0.007 & 0.563\scriptsize±0.018 & 0.438\scriptsize±0.010 & $-$ & $\underline{0.325}$\scriptsize±0.043\\
 & ACC $\uparrow$ & $\underline{0.605}$\scriptsize±0.002 & 0.583\scriptsize±0.014 & 0.586\scriptsize±0.012 & 0.604\scriptsize±0.004 & 0.563\scriptsize±0.045 & $\underline{0.605}$\scriptsize±0.018 & 0.589\scriptsize±0.021 & 0.588\scriptsize±0.028 & $\boldsymbol{0.685}$\scriptsize±0.045\\
  \midrule
  \multirow{3}{*}{Reddit} & MAPE $\downarrow$ & 20.417\scriptsize±0.423 & 23.795\scriptsize±0.124 & 26.833\scriptsize±0.189 & 81.171\scriptsize±0.324 & 23.678\scriptsize±0.125 & 26.906\scriptsize±0.337 & $\underline{21.293}$\scriptsize±0.287 & 23.665\scriptsize±0.173 & $\boldsymbol{19.247}$\scriptsize±0.361\\
 & CRPS $\downarrow$ & 35.213\scriptsize±0.527 & $-$ & $-$ & 34.050\scriptsize±0.148 & $\boldsymbol{0.407}$\scriptsize±0.023 & 0.837\scriptsize±0.034 & 0.902\scriptsize±0.009 & $-$ & $\underline{0.754}$\scriptsize±0.072\\
 & ACC $\uparrow$ & 0.494\scriptsize±0.026 & 0.482\scriptsize±0.034 & 0.487\scriptsize±0.016 & 0.470\scriptsize±0.023 & 0.537\scriptsize±0.007 & $\underline{0.549}$\scriptsize±0.013 & 0.539\scriptsize±0.024 & 0.493\scriptsize±0.015 & $\boldsymbol{0.604}$\scriptsize±0.037\\
  \midrule
  \multirow{3}{*}{\makecell{Stack \\ Overflow}} & MAPE $\downarrow$ & 6.557\scriptsize±1.243 & 5.364\scriptsize±0.038 & 5.631\scriptsize±0.046 & 17.087\scriptsize±1.163 & 6.028\scriptsize±0.053 & $\underline{4.458}$\scriptsize±0.067 & 4.931\scriptsize±0.041 & 5.474\scriptsize±0.055 & $\boldsymbol{3.527}$\scriptsize±0.078\\
 & CRPS $\downarrow$ & 6.179\scriptsize±0.122 & $-$ & $-$ & 6.308\scriptsize±0.0.257 & $\boldsymbol{0.397}$\scriptsize±0.012 & 0.474\scriptsize±0.028 & $\underline{0.454}$\scriptsize±0.014 & $-$ & 0.463\scriptsize±0.088\\
 & ACC $\uparrow$ & \underline{0.529}\scriptsize±0.024 & 0.528\scriptsize±0.021 & 0.528\scriptsize±0.034 & $\underline{0.529}$\scriptsize±0.022 & 0.528\scriptsize±0.004 & 0.527\scriptsize±0.014 & 0.528\scriptsize±0.025 & 0.528\scriptsize±0.028 & $\boldsymbol{0.615}$\scriptsize±0.014\\
\bottomrule
\end{tabular}}

\end{table*}
We compare BMTPP with state-of-the-art MTPP models in four real-world datasets. Notice that since calculating CRPS requires flexible sampling, which is prohibited in SAHP, THP and Dec-ODE, we do not report the CRPS for them \cite{lin2022exploring}. The main results are presented in Table \ref{tab:comparison}, from which we can summarize the following conclusions:
\begin{enumerate}
    \item For the MAPE metric, we see that BMTPP achieved either the best or second-best performance across all datasets, demonstrating the superiority of our method in predicting future time points compared to other algorithms. Additionally, we observe that generative MTPP models generally outperformed classical MTPP models on all datasets, particularly on the MOOC and Retweet datasets. This further validates that flexibly modeling the likelihood function contributes to improved model performance.
    \item For the CRPS metric, we find that generative MTPP models consistently outperformed classical MTPP models across all datasets, highlighting their superior model-fitting capabilities. Compared to other baseline methods, BMTPP exhibits competitive performance on MOOC and Retweet datasets, as demonstrated by achieving the second-best results on both. We also observed that TCVAE dominates the other approaches in terms of CRPS thanks to the VAE's powerful ability to capture complex distributions and latent representations.
    \item For the ACC metric, we observe that BMTPP achieves the best performance across all datasets. This is mainly because classical MTPP models, generative MTPP models, and even the recent Dec-ODE model rely on model assumptions or numerical approximation techniques to predict future event types. Thanks to the flexible modeling of discrete data by BFN, BMTPP is able to predict the next event type without being constrained by these limitations.
\end{enumerate}

\subsection{Analysis of Vector $\mathbf{c}$}
After completing the training of the BMTPP model, we can obtain the vector $\mathbf{c}$ in \eqref{cov_mat}, which denotes the covariance of time interval and different event types and also reflects the interdependence between timestamps and event types. To explicitly reveal this interaction, we present the number of elements from vector $\mathbf{c}$ across different value ranges in Fig. \ref{fig:c_value}. The results show that most of the element values are concentrated around zero, indicating that the correlation between the timestamp and most event types is weak. However, a few event types exhibit a strong correlation with the timestamp. This aligns with real-world situations; for example, during winter, discussions about sweaters significantly exceed those about other types of clothing. Furthermore, we observe that the interdependence of the timestamp and event types varies across different datasets, validating the complex interdependence between timestamps and event types in various scenarios. This also demonstrates that the proposed method can explicitly reveal such interdependence.

\begin{figure*}[t]
\centering
\includegraphics[width=0.24\linewidth]{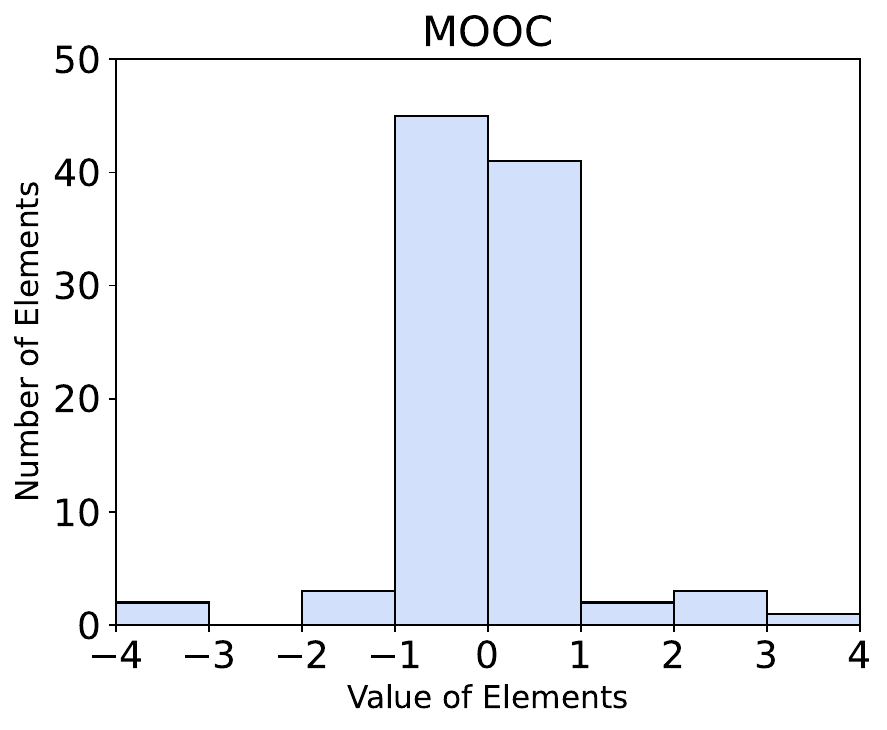}
\includegraphics[width=0.24\linewidth]{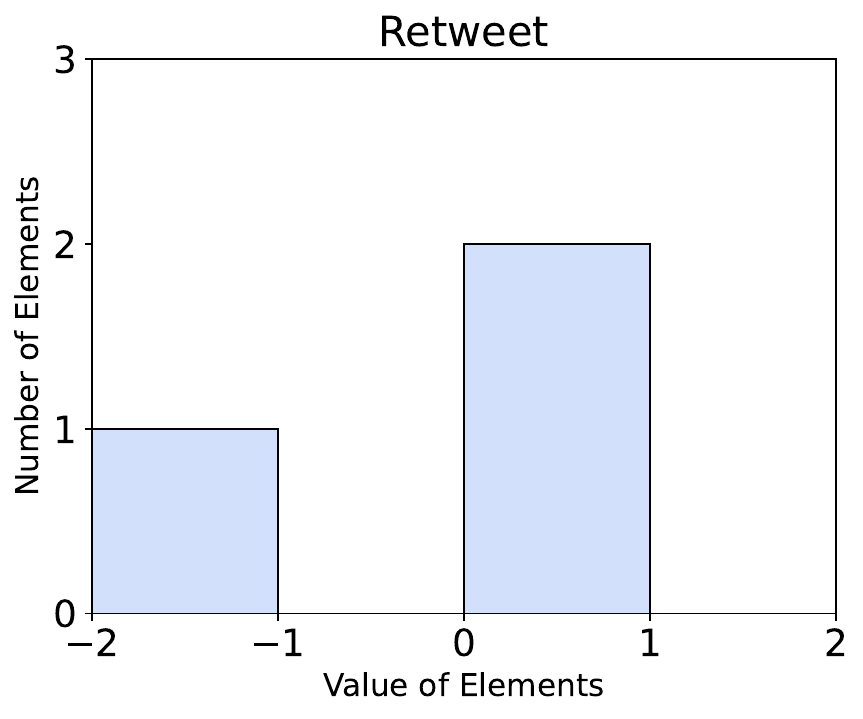}
\includegraphics[width=0.24\linewidth]{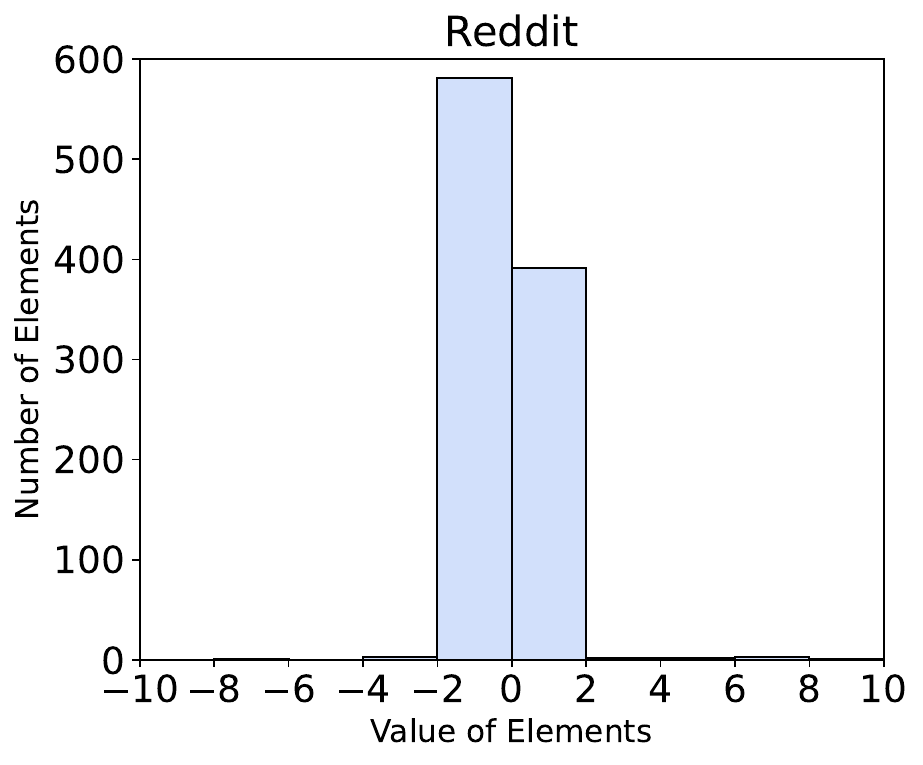}
\includegraphics[width=0.24\linewidth]{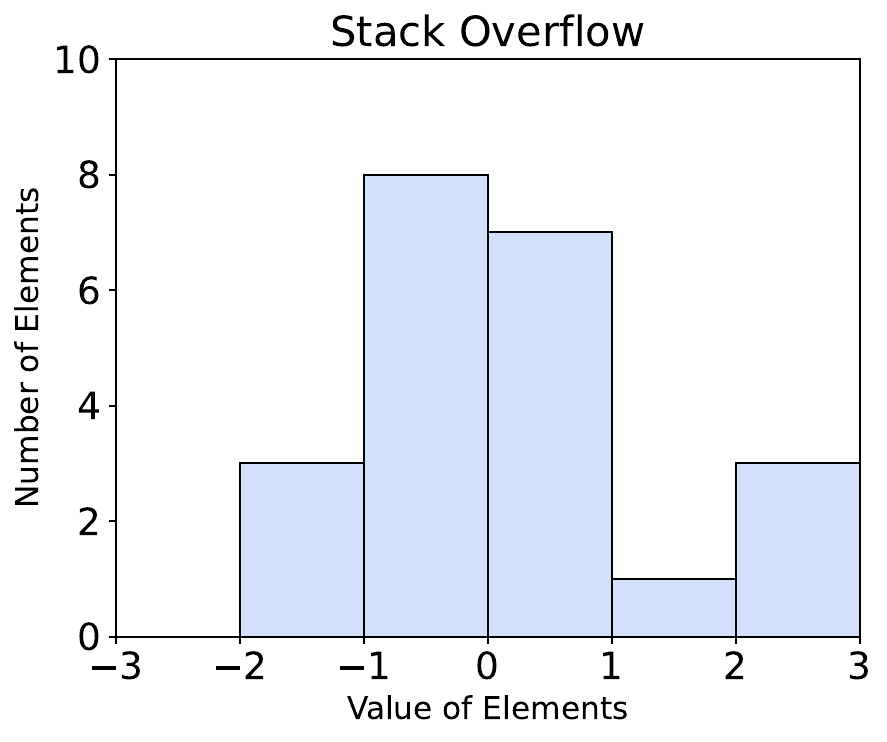}
\caption{The number of elements from vector $\mathbf{c}$ across different value ranges on four real-world datasets.}
\label{fig:c_value}
\end{figure*}

\subsection{Ablation Study}
BMTPP captures the interdependence between the timestamp and event types by incorporating the joint noise (JN) strategy. In this part, we conduct ablation studies to examine the effectiveness of this strategy. The main results are presented in Fig. \ref{fig:ablation}, from which we can see that the BMTPP with the joint noise strategy consistently outperforms the BMTPP without it across three metrics on all four real-world datasets. It implies that the interdependence between the timestamp and event types is closely related to model performance. Our joint noise strategy effectively captures this interdependence, resulting in significant improvements in model performance.

\begin{figure*}[t]
\centering
\includegraphics[width=0.32\linewidth]{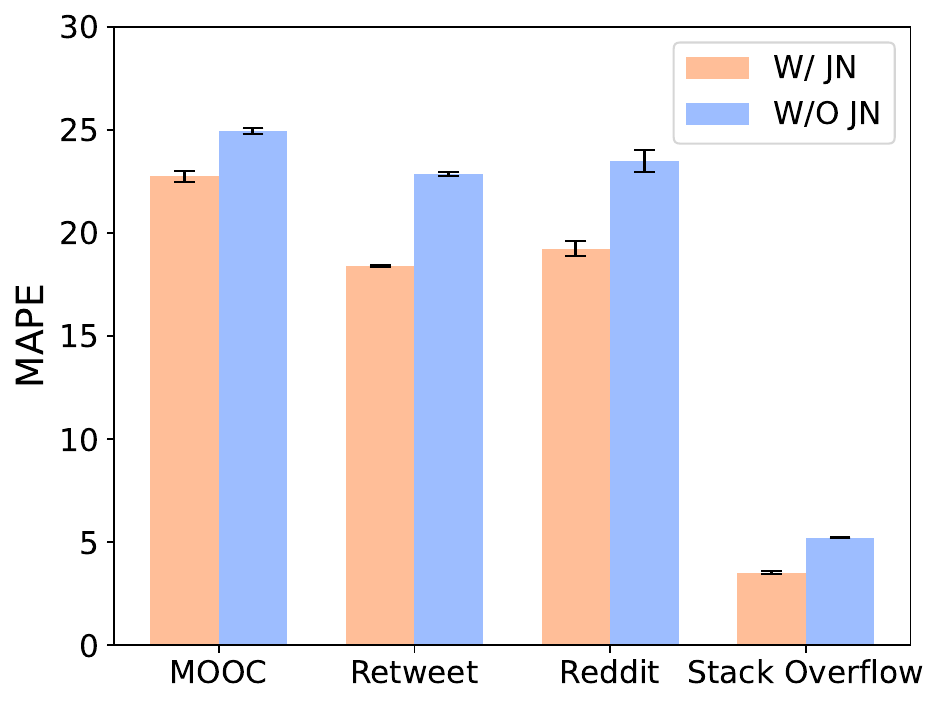}
\includegraphics[width=0.32\linewidth]{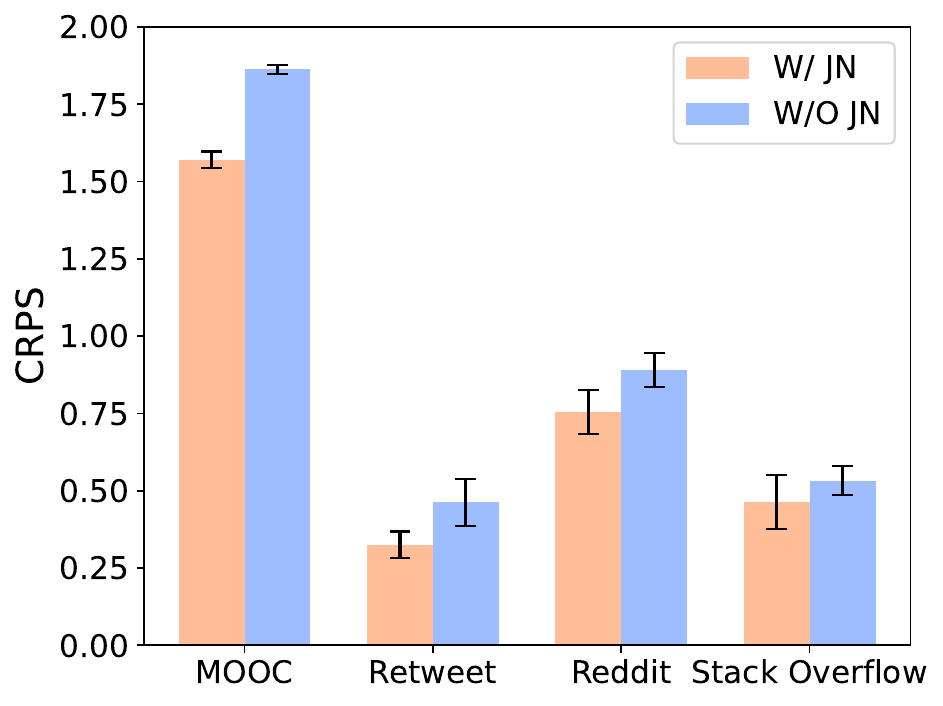}
\includegraphics[width=0.32\linewidth]{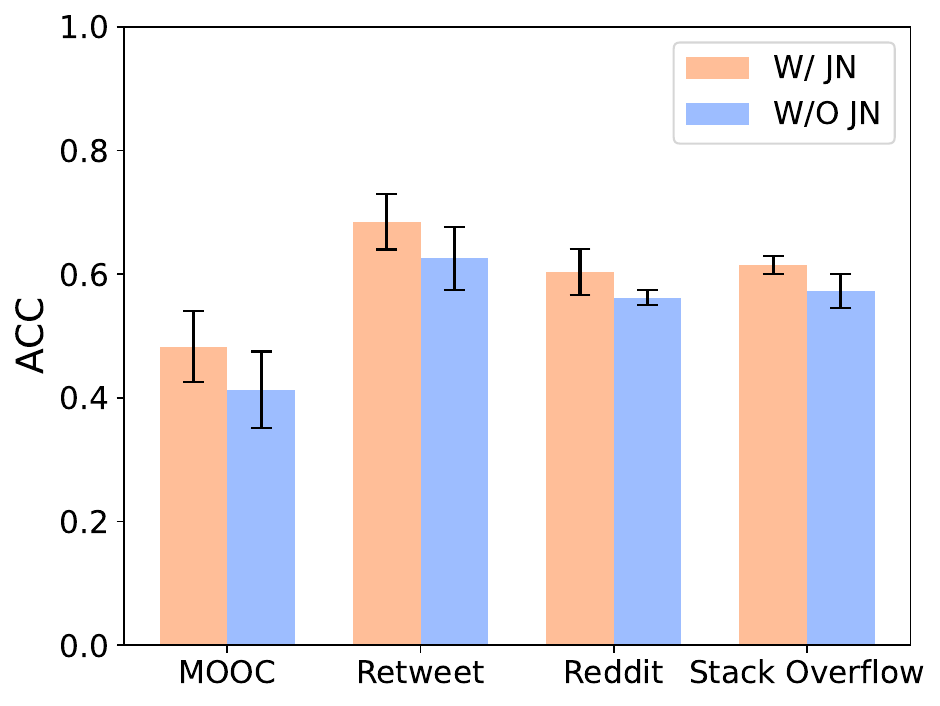}
\caption{Ablation study. W/ JN and W/O JN represent the model with and without the joint noise strategy, respectively.}
\label{fig:ablation}
\end{figure*}

\subsection{Hyperparameter Sensitivity Analysis}
\begin{table*}[t]
\renewcommand{\arraystretch}{1.0}
\caption{Performance comparison on four real-world datasets with different layer numbers and embedding size.}
\label{tab:hyperparameter}
\centering

\resizebox{\linewidth}{!}{

\begin{tabular}{c|c|ccc|ccc}
\toprule
 \multirow{3}{*}{Dataset}& \multirow{3}{*}{Metric} & \multicolumn{3}{c|}{Layer Number} & \multicolumn{3}{c}{Embedding Size}\\
 \cmidrule(lr){3-5} \cmidrule(lr){6-8}
 & & 1 & 2 & 3 & 8 & 16 & 32\\
 \midrule
 \multirow{3}{*}{MOOC} & MAPE $\downarrow$ & $\boldsymbol{22.753}$\scriptsize±0.263 & 23.527\scriptsize±0.367 & 24.842\scriptsize±0.229 & 23.462\scriptsize±0.234 & $\boldsymbol{22.753}$\scriptsize±0.263 & 23.774\scriptsize±0.315\\
 & CRPS $\downarrow$ & $\boldsymbol{1.571}$\scriptsize±0.026 & 1.632\scriptsize±0.048 & 1.629\scriptsize±0.033 & 1.583\scriptsize±0.045 & 1.575\scriptsize±0.039 & $\boldsymbol{1.571}$\scriptsize±0.026\\
 & ACC $\uparrow$ & $\boldsymbol{0.483}$\scriptsize±0.057 & 0.476\scriptsize±0.068 & 0.474\scriptsize±0.082 & 0.481\scriptsize±0.053 & 0.479\scriptsize±0.032 & $\boldsymbol{0.483}$\scriptsize±0.057 \\
 \midrule
  \multirow{3}{*}{Retweet} & MAPE $\downarrow$ & 19.130\scriptsize±0.082 & $\boldsymbol{18.379}$\scriptsize±0.045 & 20.533\scriptsize±0.059 & 18.938\scriptsize±0.042 & 19.245\scriptsize±0.057 & $\boldsymbol{18.379}$\scriptsize±0.045\\
 & CRPS $\downarrow$ & $\boldsymbol{0.325}$\scriptsize±0.043 & 0.372\scriptsize±0.051 & 0.363\scriptsize±0.038 & 0.334\scriptsize±0.026 & 0.336\scriptsize±0.047 & $\boldsymbol{0.325}$\scriptsize±0.043\\
 & ACC $\uparrow$ & 0.681\scriptsize±0.036 & $\boldsymbol{0.685}$\scriptsize±0.045 & 0.675\scriptsize±0.037 & 0.678\scriptsize±0.051 & 0.680\scriptsize±0.044 & $\boldsymbol{0.685}$\scriptsize±0.045\\
  \midrule
  \multirow{3}{*}{Reddit} & MAPE $\downarrow$ & $\boldsymbol{19.247}$\scriptsize±0.361 & 20.202\scriptsize±0.317 & 19.339\scriptsize±0.389 & 20.592\scriptsize±0.412 & $\boldsymbol{19.247}$\scriptsize±0.361 & 21.154\scriptsize±0.366\\
 & CRPS $\downarrow$ & $\boldsymbol{0.754}$\scriptsize±0.072 & 0.762\scriptsize±0.068 & 0.779\scriptsize±0.073 & 0.761\scriptsize±0.061 & 0.758\scriptsize±0.088 & $\boldsymbol{0.754}$\scriptsize±0.072\\
 & ACC $\uparrow$ & $\boldsymbol{0.604}$\scriptsize±0.037 & 0.602\scriptsize±0.045 & 0.588\scriptsize±0.041 & 0.593\scriptsize±0.036 & $\boldsymbol{0.604}$\scriptsize±0.037 & 0.602\scriptsize±0.046\\
  \midrule
  \multirow{3}{*}{\makecell{Stack \\ Overflow}} & MAPE $\downarrow$ & $\boldsymbol{3.527}$\scriptsize±0.078 & 4.528\scriptsize±0.059 & 3.703\scriptsize±0.063 & 5.254\scriptsize±0.039 & $\boldsymbol{3.527}$\scriptsize±0.078 & 3.947\scriptsize±0.083\\
 & CRPS $\downarrow$ & 0.485\scriptsize±0.127 & $\boldsymbol{0.463}$\scriptsize±0.088 & 0.491\scriptsize±0.074 & 0.481\scriptsize±0.136 & 0.472\scriptsize±0.081 & $\boldsymbol{0.463}$\scriptsize±0.088\\
 & ACC $\uparrow$ & 0.609\scriptsize±0.022 & $\boldsymbol{0.615}$\scriptsize±0.014 & 0.613\scriptsize±0.028 & 0.594\scriptsize±0.029 & 0.607\scriptsize±0.015 & $\boldsymbol{0.615}$\scriptsize±0.014\\
\bottomrule
\end{tabular}}

\end{table*}
Observing the impact of hyperparameters on model performance is crucial. In this part, we conduct experiments to explore two important hyperparameters frequently investigated in the MTPP tasks \cite{yang2022transformer,lin2022exploring}: \textit{embedding size} and \textit{layer number}. More precisely, we set the embedding size and layer number over $\{8,16,32\}$ and $\{1,2,3\}$, respectively. The results for different metrics on four real-world datasets are recorded in Table \ref{tab:hyperparameter}. The results indicate that setting the layer number to 1 or 2 leads to better model performance, especially for the MOOC and Reddit datasets, where fewer layers result in improved performance. In contrast, we observe that setting the embedding size to 16 or 32 brings improvements in model performance, suggesting that the model benefits from larger embedding sizes.

\subsection{Effect of Sampling Steps}
Akin to the diffusion models, the number of sampling steps in BMTPP has a significant impact on model performance \cite{lin2023text}. Generally, a larger number of sampling steps can result in a closer alignment between the sender distribution and receiver distribution, leading to better model performance. However, too many sampling steps can hurt the model's sampling efficiency. Therefore, we need to choose an appropriate number of sampling steps to balance the model performance and efficiency. In this part, we conduct experiments on four real-world datasets with varying sampling steps over $\{5, 10, 25, 50, 100, 250, 500\}$. The results in Fig. \ref{fig:steps} show that when we set the number of sampling steps to 100, the model is able to achieve satisfactory performance while maintaining sampling efficiency.

\begin{figure*}[t]
\centering
\includegraphics[width=0.32\linewidth]{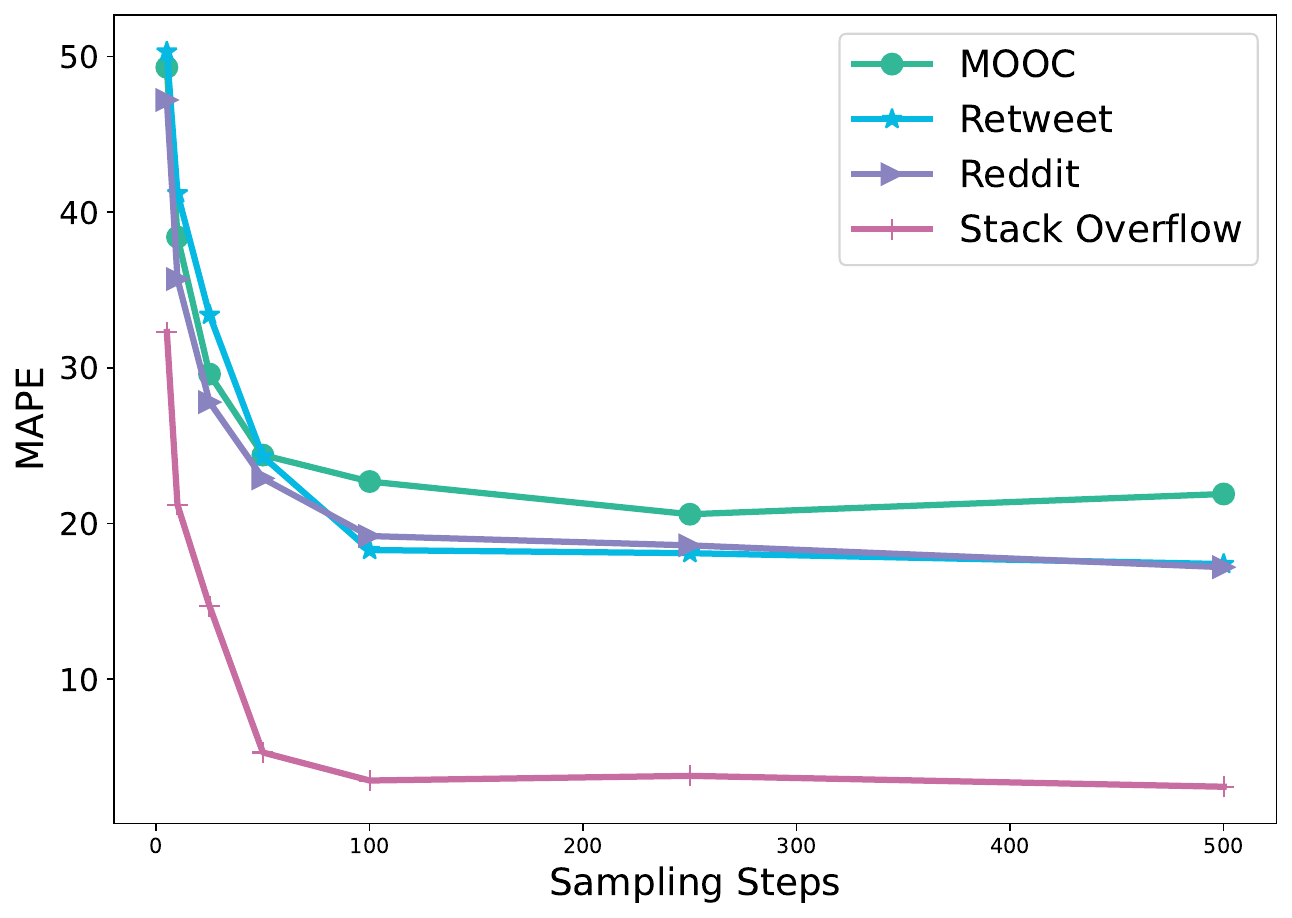}
\includegraphics[width=0.32\linewidth]{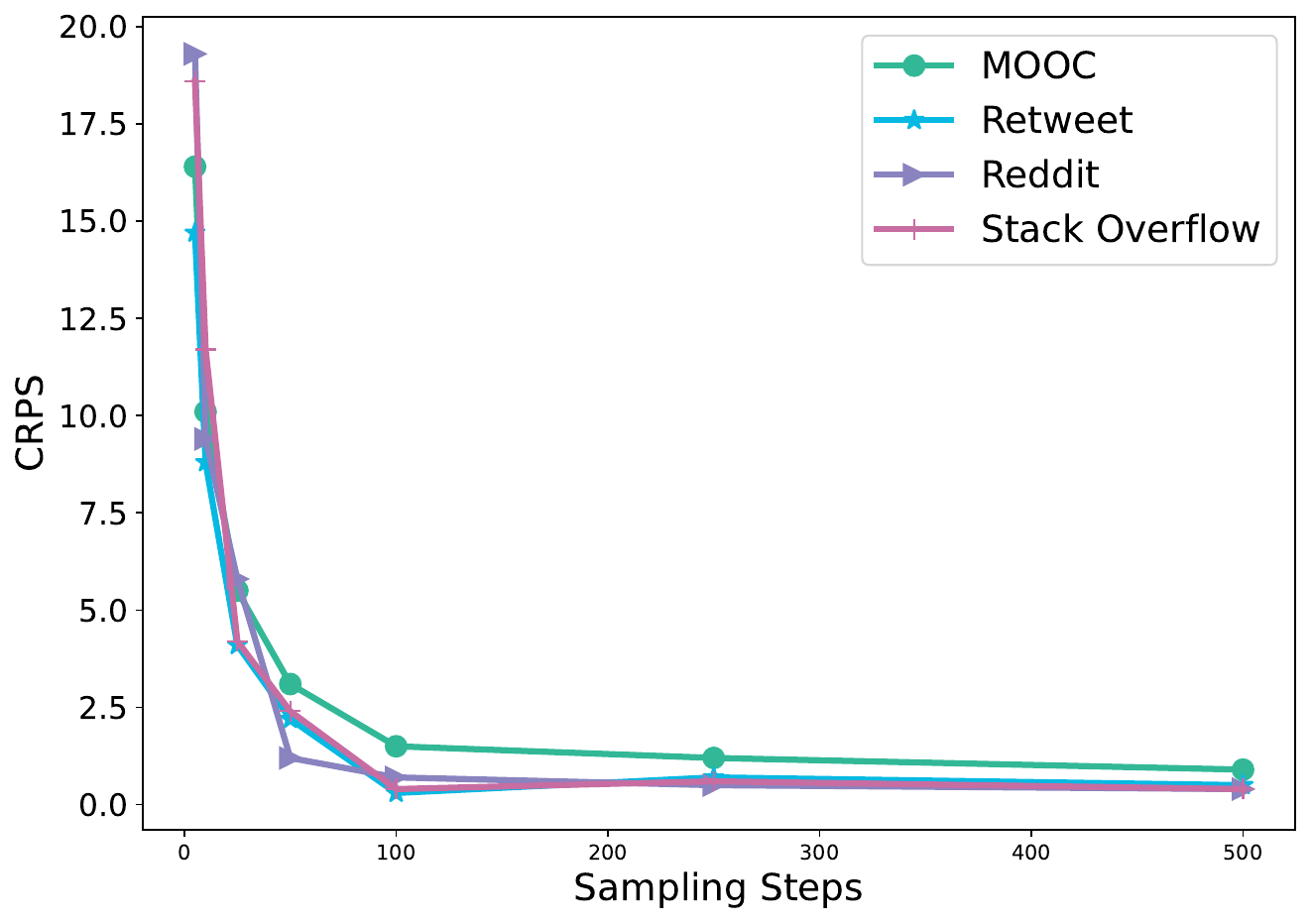}
\includegraphics[width=0.32\linewidth]{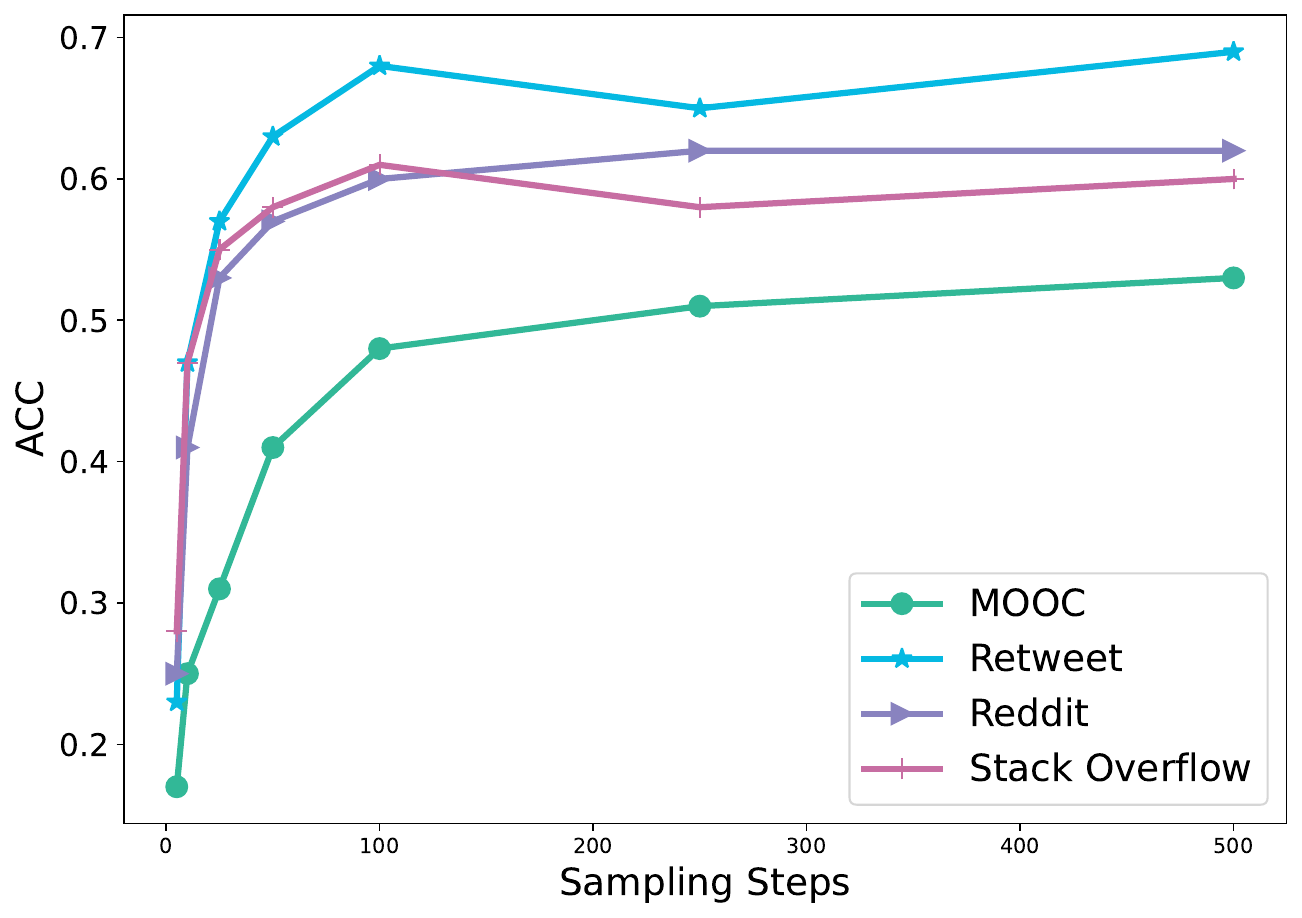}
\caption{Performance comparison on four real-world datasets with varying sampling steps.}
\label{fig:steps}
\end{figure*}

\subsection{Results on Negative Log-Likelihood}
\begin{table}[t]
\renewcommand{\arraystretch}{1.0}
\caption{The experimental results of negative log-likelihood on four real-world datasets.}
\label{tab:NLL}
\centering
\resizebox{\linewidth}{!}{
\begin{tabular}{c|cccc}
\toprule
 Dataset & MOOC & Retweet & Reddit & Stack Overflow\\
 \midrule
RMTPP      & 2.965\scriptsize±0.607 & -1.819\scriptsize±0.084 & 9.287\scriptsize±0.067 & 4.918\scriptsize±0.007 \\ 
SAHP       & -1.501\scriptsize±0.423 & -2.848\scriptsize±0.074 & 3.518\scriptsize±0.036  & 1.940\scriptsize±0.057 \\
THP        & 0.285\scriptsize±0.103 & -1.236\scriptsize±0.045 & 3.854\scriptsize±0.034 & 1.971\scriptsize±0.056 \\
LogNorm    & 1.931\scriptsize±0.284 &-2.155\scriptsize±0.094 & 9.369\scriptsize±0.063  & 4.897\scriptsize±0.013 \\ 
TCVAE      & $\leq$ 3.863\scriptsize±0.689 & $\leq$ 2.167\scriptsize±0.742 &  $\leq$ 4.408\scriptsize±0.058 & $\leq$ 2.593\scriptsize±0.473 \\
TCGAN      & (0.009\scriptsize±0.003\normalsize) & (0.016\scriptsize±0.007\normalsize) & (0.048\scriptsize±0.014\normalsize)  & (0.005\scriptsize±0.028\normalsize) \\ 
TCDDM      & $\leq$ 4.256\scriptsize±1.748 & $\leq$ 1.817\scriptsize±0.579 & $\leq$ 4.321\scriptsize±0.054  & $\leq$ 2.358\scriptsize±0.081 \\ 
Dec-ODE     & 0.374\scriptsize±0.533 & -1.736\scriptsize±0.062 & 3.645\scriptsize±0.057  & 2.443\scriptsize±0.025 \\ 
\midrule
BMTPP & $\leq$ 4.136\scriptsize±0.892 & $\leq$ 2.481\scriptsize±0.773 & $\leq$ 4.453\scriptsize±0.027& $\leq$ 2.588\scriptsize±0.093\\
\bottomrule
\end{tabular}}
\end{table}
Since obtaining the exact NLL for deep generative models is not feasible, we employ the CRPS to measure the goodness-of-fit in Table \ref{tab:comparison}. However, the exploration and comparison of the NLL remains meaningful. Thus, we follow the same evaluation configuration in prior works \cite{lin2022exploring,yuan2023spatio} and replace the NLL metric with negative variation lower bound (VLB), which is the upper bound of NLL for TCVAE, TCDDM, and BMTPP. Here, we calculate the negative VLB for BMTPP according to $L_{VLB}=L^{\infty}+L^r$, where the $L^r=-\mathbb{E}_{p_F(\boldsymbol{\theta}|\mathbf{g},1)}\ln p_O(\mathbf{g}|\boldsymbol{\theta},1)$ denotes the reconstruction losses\cite{graves2023bayesian}. For the TCGAN, we give the Wasserstein distance between empirical and model distribution in all four datasets \cite{lin2022exploring,xiao2017wasserstein}. The results in Table \ref{tab:NLL} show that the neural MTPP models using transformers as historical encoders, such as MTBB, SAHP, and THP, achieve strong performance across all datasets, indicating their strong data-fitting capabilities. Additionally, we find that two other neural MTPP models, RMTPP and LogNorm, perform poorly on most datasets, which is consistent with their results on the CRPS metric. Although generative models use VLB instead of NLL, their performance on some datasets is also competitive.

\section{Related Work}
\subsection{Temporal Point Processes}
In recent years, many neural TPP models have been proposed \cite{shchur2021neural}. Different from early classical TPP models, such as Poisson processes \cite{lin2021empirical}, Hawkes processes \cite{hawkes1971spectra}, and self-correcting processes \cite{isham1979self}, neural TPP models can learn complex dependencies among the events. \cite{du2016recurrent} firstly employed the RNN as the historical encoder for capturing these dependencies. To model a continuous time model, \cite{mei2017neural} used the LSTM as the historical encoder and designed a temporal memory cell for it. To better capture the long-range dependencies among the events, \cite{zuo2020transformer} and \cite{zhang2020self} introduced the attention-based historical encoder to improve the predictive performance. Aside from research on the historical encoder, another line of work focuses on studying the density decoder. \cite{omi2019fully} used the neural network to parametrize the intensity decoder of the cumulative hazard function. Moreover, the single Gaussian \cite{xiao2017modeling} and the mixture of log-norm \cite{shchur2019intensity} are also used for constructing the target distribution. More recently, generative models, including diffusion models, VAE, and GAN, have been investigated as density decoders due to their powerful generative capabilities on continuous data\cite{lin2022exploring}. Unlike these models, we utilize the BFN as the density decoder to flexibly handle both continuous and discrete data simultaneously.

\subsection{Bayesian Flow Networks}
Bayesian flow networks is a recently introduced deep generative model showing great promise in providing a uniform strategy for modeling continuous, discretized, and discrete observations \cite{graves2023bayesian}. Similar as diffusion models \cite{sohl2015deep,ho2020denoising}, BFN also requires multiple sampling steps to generate desired samples. The main difference is that diffusion models build the probability distributions over the sample space, while the BFN constructs over the space of distribution parameters. As a result, BFN can naturally address the challenges posed by discrete data in diffusion models \cite{zhou2024beta}. \cite{song2024unified} introduced the BFN to the 3D molecules for the first time to address the multi-modality and noise sensitivity issues. To explore the connection between the BFN and diffusion models, \cite{xue2024unifying} explained their relations from the perspective of stochastic differential equations. \cite{wu2024paramrel} proposed a novel parameter space representation learning framework based on the BFN. In this paper, we first apply the BFN to the field of marked temporal point processes.


\section{Conclusion}
In this paper, we propose a novel approach for MTPP tasks to directly model the marked temporal joint distribution without the limitation of the model assumptions and numerical approximation techniques. In addition, we design the joint noise strategy to capture and reveal the complex interdependence between the timestamp and even types. Extensive experiments on four real-world datasets demonstrate the effectiveness of the proposed approach. In future research, it is of interest to apply the BMTPP to more complex coupled marked temporal data.





\bibliography{IEEE}
\bibliographystyle{ieeetr}

\newpage

 




\vfill

\end{document}